\definecolor{mygreen}{RGB}{46,139,87}
\definecolor{myred}{RGB}{255,152,150}
\definecolor{myblue}{RGB}{30,144,255}
\definecolor{myyellow}{RGB}{219,219,141}
\definecolor{mybrown}{RGB}{197,157,148}
\definecolor{my_black}{HTML}{add2c9}
\newcommand{\F}{\mathrm{F}}
\newcommand*{\E}{\mathbb{E}}
\newcommand{\ve}{\@ifnextchar\bgroup{\velong}{{\bm{e}}}}
\newcommand{\velong}[1]{{\bm{#1}}}
\def\UrlAlphabet{%
      \do\a\do\b\do\c\do\d\do\e\do\f\do\g\do\h\do\i\do\j%
      \do\k\do\l\do\m\do\n\do\o\do\p\do\q\do\r\do\s\do\t%
      \do\u\do\v\do\w\do\x\do\y\do\z\do\A\do\B\do\C\do\D%
      \do\E\do\F\do\G\do\H\do\I\do\J\do\K\do\L\do\M\do\N%
      \do\O\do\P\do\Q\do\R\do\S\do\T\do\U\do\V\do\W\do\X%
      \do\Y\do\Z}
\def\UrlDigits{\do\1\do\2\do\3\do\4\do\5\do\6\do\7\do\8\do\9\do\0}
\g@addto@macro{\UrlBreaks}{\UrlOrds}
\g@addto@macro{\UrlBreaks}{\UrlAlphabet}
\g@addto@macro{\UrlBreaks}{\UrlDigits}
\newcommand{\our}{\textsc{DevFT}\xspace}
\title{Learning Like Humans: Resource-Efficient Federated Fine-Tuning through Cognitive Developmental Stages}
\author{
  Yebo Wu\textsuperscript{1\dag}, Jingguang Li\textsuperscript{1\dag},
  Zhijiang Guo\textsuperscript{2,3}\thanks{Corresponding Authors. \textsuperscript{\dag} Equal Contribution.},
  Li Li\textsuperscript{1}\footnotemark[1] \\
  \textsuperscript{1}University of Macau, \textsuperscript{2} HKUST, \textsuperscript{3} HKUST (Guangzhou)\\
  \texttt{\{yc37926,mc45005,llili\}@um.edu.mo}, \texttt{zhijiangguo@hkust-gz.edu.cn}
}
\begin{document}

\maketitle

\begin{abstract}

Federated fine-tuning enables Large Language Models (LLMs) to adapt to downstream tasks while preserving data privacy, but its resource-intensive nature limits deployment on edge devices. In this paper, we introduce \textbf{Developmental Federated Tuning (\our)}, a resource-efficient approach inspired by cognitive development that progressively builds a powerful LLM from a compact foundation. \our decomposes the fine-tuning process into developmental stages, each optimizing submodels with increasing parameter capacity. Knowledge from earlier stages transfers to subsequent submodels, providing optimized initialization parameters that prevent convergence to local minima and accelerate training. This paradigm mirrors human learning, gradually constructing comprehensive knowledge structure while refining existing skills. To efficiently build stage-specific submodels, \our introduces deconfliction-guided layer grouping and differential-based layer fusion to distill essential information and construct representative layers. Evaluations across multiple benchmarks demonstrate that \our significantly outperforms state-of-the-art methods, achieving up to 4.59$\times$ faster convergence, 10.67$\times$ reduction in communication overhead, and 9.07\% average performance improvement, while maintaining compatibility with existing approaches. 

\end{abstract}
\section{Introduction}


Large Language Models (LLMs) exhibit exceptional capabilities across diverse domains~\citep{karanikolas2023large,tian2024hydralora}. While fine-tuning effectively adapts these models to specific tasks~\citep{han2024parameter}, it requires substantial task-specific data. This data often resides privately on edge devices, making centralized collection impractical~\citep{wu2024heterogeneity,wu2024neulite,wang2023fedins2}. Federated fine-tuning~\citep{zhang2024towards} offers a privacy-preserving alternative for collaborative adaptation. Nevertheless, deploying massive LLMs for federated fine-tuning on resource-limited edge devices remains challenging due to hardware and communication constraints~\citep{tam2024fedhybrid, tian2024breaking}.

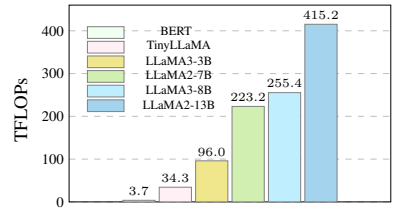
\begin{wrapfigure}{r}{0.37\textwidth}
\vspace{-12pt}
\centering
\adjustbox{max width=0.37\textwidth}{\definecolor{ublue}{RGB}{52,152,219}
\definecolor{ured}{RGB}{240,100,100}
\definecolor{uorange}{RGB}{247,175,89}
\definecolor{upurple}{RGB}{148,137,250}
\definecolor{pink1}{HTML}{FFF0F6}
\definecolor{pink2}{HTML}{FCC2D7}
\definecolor{pink3}{HTML}{FAA2CA}

\definecolor{c1}{HTML}{FFF0F5}
\definecolor{c2}{HTML}{F0E68C}
\definecolor{c3}{HTML}{D2EFB2}
\definecolor{c4}{HTML}{BFEFFF}

\definecolor{c5}{HTML}{A4D3EE}
\definecolor{c6}{HTML}{F0FFF0}

\hspace{-1.2cm} 
\begin{tikzpicture}
    \centering
    \scriptsize{
    \begin{axis}[
    at={(0,0)},
    ymajorgrids,
    grid style=dashed,
    legend style={at={(0.2,1)}, anchor=south west},
    legend cell align={left},
    ybar,
    enlarge x limits=0.8,
    xtick align=inside,
    height=0.35\textwidth,
    width=0.5\textwidth,
    bar width=2.2em,
    xshift=-1.5em,
    nodes near coords,
    nodes near coords align={vertical},
    nodes near coords style={font=\tiny, scale=1,/pgf/number format/fixed, /pgf/number format/precision=1},
    every node near coord/.append style={
    /pgf/number format/.cd,
    fixed,
    fixed zerofill,
    precision=1
    },
    xlabel={},
    xmax=0,
    xmin=0,
    symbolic x coords={0},
    legend style={cells={align=left}},
    xtick=data,
    nodes near coords align={vertical},
    ymin=0,
    ymax=460,
    ytick={0,100,200,300,400},
    yticklabels={0,100,200,300,400},
    xticklabels={},
    xtick style={draw=none},
    ytick style={draw=none},
    yticklabel style={/pgf/number format/fixed,/pgf/number format/fixed},
    legend style={draw=none,
    line width=1pt,
    at={(0.5,1.0)},
    anchor=south},
    xtick=data,
    axis on top=false,
    ]
    \addplot[fill=c6,draw=gray, area legend] coordinates {(0,3.7)};
    \addplot[fill=c1, draw=gray, area legend] coordinates {(0,34.3)};
    \addplot[ fill=c2,draw=gray, area legend] coordinates {(0,96)};
    \addplot[ fill=c3,draw=gray, area legend] coordinates {(0,223.2)};
    \addplot[ fill=c4,draw=gray, area legend] coordinates {(0,255.4)};
    \addplot[ fill=c5,draw=gray, area legend] coordinates {(0,415.2)};
    \end{axis}
    }
    \node [rectangle,draw=gray,fill=c6,inner sep=2pt,minimum height=0.8em,minimum width=2.5em,font=\small,anchor=north,align=center,] (label1) at (1em,12em){};
    \node [rectangle,draw=gray,fill=c1,inner sep=2pt,minimum height=0.8em,minimum width=2.5em,font=\small,anchor=north,align=center,] (label2) at (1em,11em){};
    \node [rectangle,draw=gray,,fill=c2,inner sep=2pt,minimum height=0.8em,minimum width=2.5em,font=\small,anchor=north,align=center,] (label3) at (1em,10em){};
    \node [rectangle,draw=gray,,fill=c3,inner sep=2pt,minimum height=0.8em,minimum width=2.5em,font=\small,anchor=north,align=center,] (label4) at (1em,9em){};
    \node [rectangle,draw=gray,,fill=c4,inner sep=2pt,minimum height=0.8em,minimum width=2.5em,font=\small,anchor=north,align=center,] (label5) at (1em,8em){};
    \node [rectangle,draw=gray,,fill=c5,inner sep=2pt,minimum height=0.8em,minimum width=2.5em,font=\small,anchor=north,align=center,] (label6) at (1em,7em){};
    \node [align=center] (label1_1) at ([xshift=4.8em,yshift=-0.35em]label1.north){\tiny BERT};
    \node [align=center] (label1_2) at ([xshift=5em,yshift=-0.35em]label2.north){\tiny TinyLLaMA};
    \node [align=center] (label1_3) at ([xshift=5em,yshift=-0.35em]label3.north){\tiny LLaMA3-3B};
    \node [align=center] (label1_4) at ([xshift=5em,yshift=-0.35em]label4.north){\tiny LLaMA2-7B};
    \node [align=center] (label1_5) at ([xshift=5em,yshift=-0.35em]label5.north){\tiny LLaMA3-8B};
    \node [align=center] (label1_6) at ([xshift=5em,yshift=-0.35em]label6.north){\tiny LLaMA2-13B};
    
    \node [rotate=90]at (-4.8em,6.3em) {\footnotesize{TFLOPs}};
\end{tikzpicture}}
\caption{Computational overhead in one-step fine-tuning of different language models using LoRA.}
\vspace{-4mm}
\label{fig_motivation_flops}
\end{wrapfigure}
To address these challenges, researchers have proposed various parameter-efficient federated fine-tuning approaches~\cite{wu2025survey}, with LoRA-based methods garnering significant attention due to their efficiency and flexibility~\cite{guo2024selective, wang2024flora}.
However, existing LoRA-based methods typically fine-tune LLMs end-to-end, which remains computationally prohibitive for edge devices compared to small language models such as BERT~\cite{devlin2018bert}.
To quantitatively illustrate this challenge, Figure~\ref{fig_motivation_flops} presents a comparative analysis of the computational costs involved in one-step fine-tuning of various LLaMA~\cite{llama2} series models and BERT.
We observe that even fine-tuning the relatively compact TinyLLaMA~\cite{zhang2024tinyllama} requires 9.3$\times$ more FLOPs than BERT. For the larger LLaMA2-13B~\cite{llama2}, the computational demands surge to 415.2 TFLOPs, which is 112.2$\times$ that of BERT.
Such substantial computational requirements fundamentally challenge the practical deployment of federated fine-tuning on resource-constrained devices, even with current parameter-efficient techniques.

\begin{figure}[!t]
  \centering
  \includegraphics[width=0.88\linewidth]{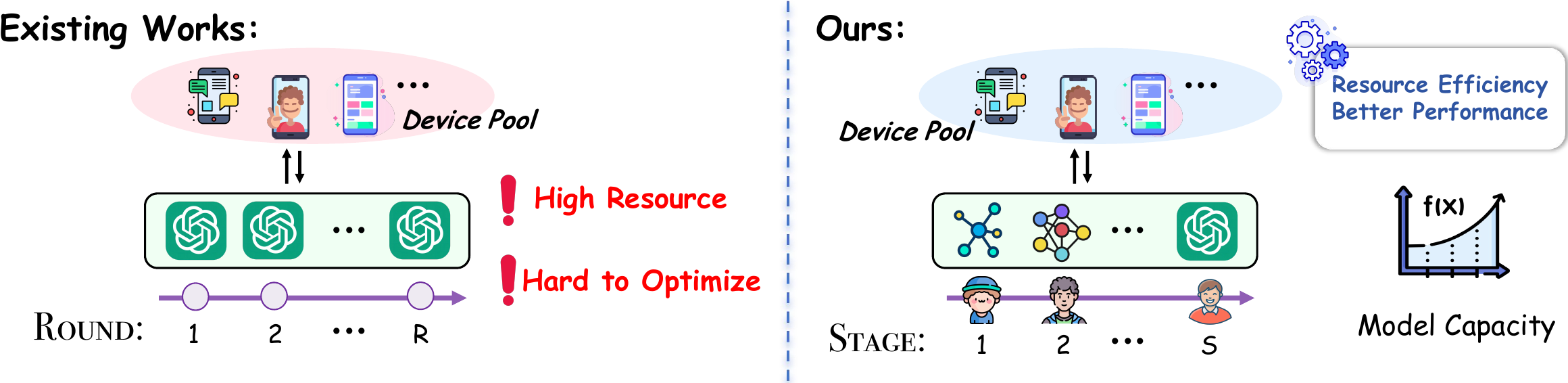}
  \caption{Workflow comparison between existing works and our proposed method.}
    \vspace{-12pt}
\label{fig_motivation}
\end{figure}

Inspired by human cognitive development~\cite{bengio2009curriculum,sweller2008human,mcardle2014human}, where learning progresses incrementally rather than instantaneously, we propose \textbf{Developmental Federated Tuning (\our)}, a resource-efficient federated fine-tuning approach designed to mitigate these computational burdens by progressively cultivating a more capable LLM from a compact foundation. As shown in Figure~\ref{fig_motivation}, rather than continuously updating the LLM throughout the federated fine-tuning process, we decompose the fine-tuning process into distinct developmental stages. Specifically, the learning journey begins with a compact submodel (\textit{analogous to child}), and upon mastering stage-specific competencies, we strategically expand the submodel capacity (\textit{mimicking human growth}), while transferring the acquired knowledge to initialize the submodel of the next stage. This growth process continues until the model reaches its target capacity (\textit{analogous to adult}).

This developmental paradigm, starting with compact models, offers several inherent advantages. Smaller models typically exhibit smoother loss landscapes, thereby effectively mitigating convergence to local minima. Additionally, the insights gained from training smaller models provide an informed initialization for larger architectures, improving overall model performance in subsequent stages. Compared to end-to-end LLM fine-tuning, \our's progressive increase in model capacity significantly accelerates the federated fine-tuning process while reducing computation and communication costs. However, a critical challenge lies in: \textit{How to architect stage-specific submodels to ensure effective knowledge transfer across consecutive stages while optimizing overall performance?}

To address this challenge effectively, \our introduces two novel techniques. The deconfliction-guided layer grouping mechanism initially clusters layers based on parameter similarity, thereby grouping layers with minimal parameter conflicts together. Subsequently, the differential-based layer fusion strategy strategically distills and integrates the unique semantic information of each layer through arithmetic operations, yielding a representative layer for each group that encapsulates the group's collective intelligence and core functionality. These representative layers are then concatenated sequentially to construct the stage-specific submodel for federated fine-tuning. Due to the functional homogeneity within groups, layers can directly inherit knowledge from their corresponding representative layers, thereby facilitating seamless knowledge transfer across stages.

In order to empirically validate the effectiveness of \our and its advantages, we conduct extensive experiments on multiple benchmarks. \our significantly outperforms state-of-the-art methods, achieving up to 4.59$\times$ faster convergence, 10.67$\times$ reduction in communication overhead, and 9.07\% average performance improvement, while maintaining compatibility with existing approaches.

\section{Background}

\subsection{Existing Parameter-Efficient Federated Fine-tuning}

Parameter-efficient federated fine-tuning presents a compelling strategy to mitigate resource demands in distributed learning by freezing most pre-trained model parameters and selectively updating only a small, task-specific subset~\cite{wu2025survey}. These methods generally fall into the following categories. Prompt-based techniques~\cite{guo2023promptfl, yang2023efficient, su2024federated} utilize carefully designed soft prompts to guide model behavior without altering the pre-trained weights. Adapter-based methods~\cite{cai2022fedadapter, kim2023client, liu2023communication, li2022federated} incorporate lightweight adapter layers into the network architecture, allowing for task adaptation with minimal modifications. Notably, LoRA-based approaches have garnered significant interest due to their effectiveness~\cite{guo2024selective}. 

LoRA-based methods~\cite{wang2024flora,sun2024improving} introduce low-rank adaptations to the weight updates, efficiently preserving the expressiveness of the original model. HETLoRA~\cite{cho2024heterogeneous} assigns varying LoRA ranks to different devices to accommodate heterogeneous computational resources effectively. FeDeRA~\cite{yan2024federa} addresses data heterogeneity by initializing LoRA matrices using singular value decomposition on pre-trained parameters. FLASC~\cite{kuo2024federated} introduces sparsity into LoRA modules to decrease communication overhead. While these methods have shown promise in their respective domains, they often do not fully tackle the fundamental issue of the substantial computational requirements imposed by end-to-end LLM fine-tuning. This persistent challenge motivates our proposed approach.

\subsection{Motivation for Developmental Federated Tuning}
To address the persistent challenge of substantial computational burdens, and unlike existing works that update the LLM in an end-to-end manner throughout the federated process, which can be prohibitively resource-intensive, we propose a different paradigm. 
Drawing inspiration from human cognitive development~\cite{bengio2009curriculum,sweller2008human,mcardle2014human}, where learning progresses incrementally rather than instantaneously, we aim to mitigate these computational burdens by progressively cultivating a more capable LLM from a compact foundation. 

Specifically, we decompose the fine-tuning process into $S$ stages, mimicking different periods in human learning. 
The submodel capacity (i.e., the number of layers) at each stage is denoted as $\{L_{1}, L_{2}, \dots, L_{S}\}$, forming a strictly monotonically increasing sequence where $L_{s_{1}} < L_{s_{2}}$ for any $s_{1} < s_{2}$. The final stage capacity $L_{S}$ equals $L$, encompassing all layers of the LLM. Additionally, the knowledge acquired in each stage seamlessly transfers to the submodel of the subsequent stage, providing optimized initialization parameters. Compared to end-to-end fine-tuning, this developmental paradigm significantly reduces resource overhead for edge devices while achieving superior performance through a smoother optimization trajectory.
In this way, \our enables participating devices to efficiently fine-tune an $L$-layer LLM for downstream tasks. 

\section{Developmental Federated Tuning (\our)}

\begin{figure}[!t]
  \centering
  \includegraphics[width=1\linewidth]{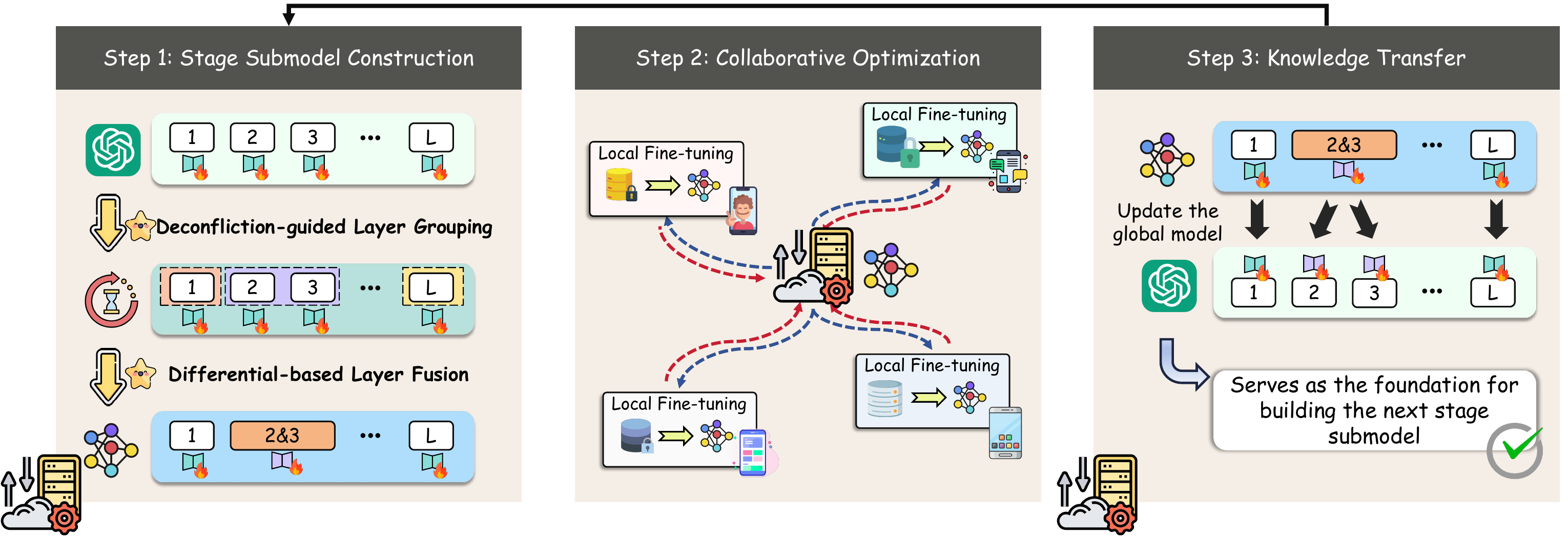}
    \caption{Overview of \our: The server first constructs the stage-specific submodel (step \ding{172}), followed by collaborative optimization across edge devices (step \ding{173}). After each stage, the acquired knowledge is employed to update the global model, which serves as the foundation for building the subsequent stage submodel (step \ding{174}).}
    \vspace{-12pt}
\label{workflow}
\end{figure}

\subsection{Overview}

Figure~\ref{workflow} depicts the overview of \our, which consists of three main steps. Initially, the server constructs a stage-specific submodel (step \ding{172}). Subsequently, the federated fine-tuning process commences, where participating devices collaboratively fine-tune the submodel (step \ding{173}). Upon completion of the current stage, the process advances to the next stage, where the acquired knowledge is used to update the global model (step \ding{174}) and is seamlessly transferred
to the subsequent stage submodel (Section~\ref{sec_Knowledge_inheritance}). This progressive model training process continues until the completion of the $S$-th stage.
During the stage submodel construction process, the server first employs the deconfliction-guided layer grouping mechanism (Section~\ref{sec_DGLF}) to cluster layers with minimal parameter conflicts into the same group. Then, it applies the differential-based layer fusion strategy (Section~\ref{sec_LAIF}) to integrate intra-group information, generating a representative layer for each group. After that, these representative layers are concatenated sequentially to construct the stage-specific submodel.

\subsection{Deconfliction-guided Layer Grouping}\label{sec_DGLF}


\begin{figure}[!t]
  \centering
  \includegraphics[width=0.92\linewidth]{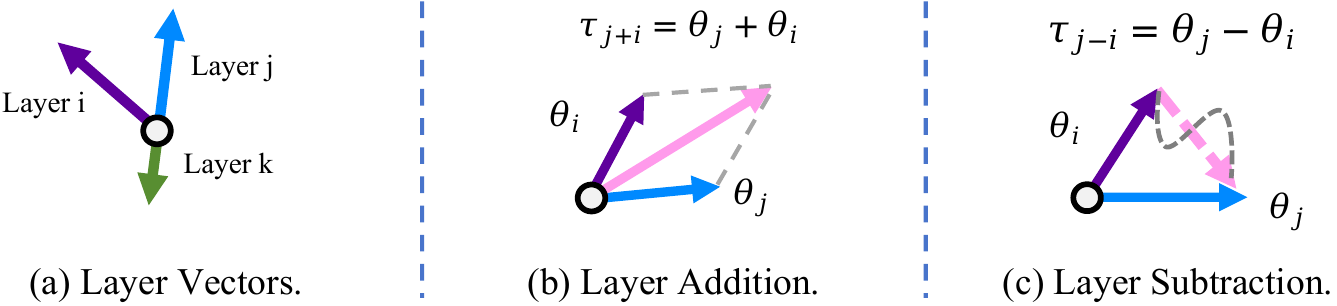}
  \caption{An illustration of layer vectors and layer arithmetic operations.
  }
\vspace{-12pt}
\label{fig_vector}
\end{figure}

As shown in Figure~\ref{fig_vector}(a), parameters of each layer can be represented as corresponding layer vectors, with varying degrees of parameter conflicts between different layers. 
When constructing representative layers for each group, significant parameter conflicts between layers can lead to substantial information loss, as parameters with opposing signs may neutralize each other's unique contributions during the layer fusion process.
To ensure the effectiveness of layer fusion, we propose a deconfliction-guided layer grouping (DGLG) mechanism that clusters layers with minimal parameter conflicts into the same group to preserve their respective knowledge. Specifically, the server initially calculates the inter-layer parameter similarity using Equation~\eqref{Eq_Similarity}:
\begin{equation}
    \label{Eq_Similarity}
    \text{sim}(\theta_i, \theta_j) = \frac{\langle \theta_i, \theta_j \rangle}{\|\theta_i\| \|\theta_j\|},
\end{equation}
where $\theta_i$ and $\theta_j$ denote the parameters of layers $i$ and $j$, respectively, including their corresponding LoRA parameters. This calculation generates a layer similarity matrix $\mathbf{W}$, where each element $w_{ij}$ represents the parameter similarity between layers $i$ and $j$.
Higher similarity values indicate lower parameter conflicts, suggesting these layers should be grouped together. Conversely, lower similarity values signify more severe parameter conflicts, necessitating the assignment of these layers to different groups.
Based on the similarity matrix $\mathbf{W}$, we construct a complete undirected graph 
$G = (\mathcal{V}, \mathcal{E})$, where 
$\mathcal{V} = \{v_1, v_2, ..., v_L\}$ represents the set of layers and 
$\mathcal{E} = \{\text{sim}(v_i, v_j) | v_i, v_j \in \mathcal{V}, w_{ij}=w_{ji}\}$ 
denotes the set of edges weighted by layer similarities.
The objective is to partition graph $G$ into $L_s$ non-overlapping groups $\{\mathrm{g}_{n}\}_{n=1}^{L_s}$ for stage $s$, which can be formally expressed as:
\begin{equation}
    \small
    \label{eq_partition_obj}
    \begin{aligned}
    \min_{\{\mathrm{g}_1,\mathrm{g}_2,...,\mathrm{g}_{L_s}\}} &\sum_{n=1}^{L_s} \sum_{m \neq n} \text{cut}(\mathrm{g}_n, \mathrm{g}_m), \text{where}~     \text{cut}(\mathrm{g}_n, \mathrm{g}_m) = \sum_{p \in \mathrm{g}_n} \sum_{q \in \mathrm{g}_m} w_{pq}, \\
    \text{s.t.}~ &\forall m,n \in \{1,2,...,L_s\}, m \neq n \Rightarrow \mathrm{g}_m \cap \mathrm{g}_n = \emptyset~\text{and}~ \bigcup_{n=1}^{L_s} \mathrm{g}_n = \mathcal{V}.
    \end{aligned}
\end{equation}
To solve the optimization problem in Equation~\eqref{eq_partition_obj}, we first construct the degree matrix $\mathbf{D} = \text{diag}(d_1,\ldots,d_L)$, where $d_i = \sum_{j=1}^L w_{ij}$ represents the sum of weights connected to vertex $v_{i}$. We then compute the Laplacian matrix as $\mathbf{L} = \mathbf{D} - \mathbf{W}$ and perform eigenvalue decomposition on $\mathbf{L}$ to obtain the eigenvectors corresponding to the $L_s$ smallest eigenvalues. These eigenvectors are stacked column by column to form the embedding matrix $\mathbf{E} \in \mathbb{R}^{L \times L_s}$. 
Finally, k-means clustering is applied to $\mathbf{E}$ to partition the vertex set $\mathcal{V}$ into $L_s$ disjoint groups. 
This process can be formally expressed as:
\begin{equation}
\small
\begin{aligned}
    \{\mathrm{g}_1, \ldots, \mathrm{g}_{L_s}\} &= \text{k-means}\left(\mathbf{E}, L_s \right), \quad \mathbf{E} = [\mathbf{v}_1, \ldots, \mathbf{v}_{L_{s}}], \\
    \text{where} \quad \mathbf{L} &= \mathbf{D} - \mathbf{W}, \quad \mathbf{D} = \text{diag}\left(\sum_{j=1}^L w_{1j}, \ldots, \sum_{j=1}^L w_{Lj}\right), \\
    \mathbf{L} \mathbf{v}_t &= \lambda_t \mathbf{v}_t, \quad \forall t \in \{1, \ldots, L_s\}, \quad \text{s.t.} \quad \lambda_1 \leq \lambda_2 \leq \cdots \leq \lambda_{L_s},
\end{aligned}
\end{equation}
where $\lambda_{t}$ and $\mathbf{v}_{t}$ represent the $t$-th eigenvalues and corresponding eigenvectors of $\mathbf{L}$.
Through this deconfliction-guided layer grouping mechanism, we can partition the $L$ layers of the global model into $L_s$ groups $\{\mathrm{g}_{n}\}_{n=1}^{L_s}$, where layers within each group exhibit minimal parameter conflicts.

\subsection{Differential-based Layer Fusion}\label{sec_LAIF}

After obtaining the partitioned groups, we proceed to construct a representative layer for each group. To effectively synthesize these representative layers, we introduce the differential-based layer fusion (DBLF) strategy, which consolidates layer information within each group through well-defined arithmetic operations. 
As illustrated in Figure~\ref{fig_vector}(b), the layer addition operation merges knowledge from two distinct layers, yielding a composite layer that encapsulates the semantic information of both source components.
Figure~\ref{fig_vector}(c) illustrates the layer subtraction operation, which distills the unique semantic information present in one layer relative to another. For any given layers $i$ and $j$, these operations are defined as follows:
\begin{equation}
\begin{aligned}
\tau_{j+i} &= \theta_j + \theta_i, \\
\tau_{j-i} &= \theta_j - \theta_i,
\end{aligned}
\end{equation}
where $\tau_{j+i}$ and $\tau_{j-i}$ denote the resulting parameter vectors after addition and subtraction operations, respectively. 
These operations enable precise knowledge editing in the parametric space.
A naive approach for intra-group information integration involves performing the addition operation on all layers. However, this method introduces significant information redundancy, as layers within the same group  $\mathrm{g}_{n}$ typically share similar functional characteristics.
This redundancy limits the submodel's capability to capture diverse and meaningful representations.

To address this challenge, instead of indiscriminately merging all information, DBLF selectively integrates the unique semantic information of each layer.
Specifically, it designates the first layer of each group as the \emph{anchor layer} and computes the information differentials of other layers relative to this \emph{anchor layer} through the layer subtraction operation. During layer fusion, only these information differentials are encapsulated into the \emph{anchor layer}, thereby effectively preserving each layer's essential information while eliminating redundancy. This fusion process can be formulated as:
\begin{equation}
\vartheta^{\mathrm{g_{n}}} = \theta_{\text{\emph{anchor}}} + \beta \sum_{j \in \mathrm{g_{n}}} (\theta_j - \theta_{\text{\emph{anchor}}}),
\end{equation}
where $\beta$ is a weighting factor, and $\vartheta^{\mathrm{g_{n}}}$ denotes the representative layer of group $\mathrm{g_{n}}$, which encapsulates the distinctive features from all constituent layers in the group.
These derived representative layers are then concatenated sequentially to construct a stage-specific submodel for federated fine-tuning.

\subsection{Knowledge Transfer}\label{sec_Knowledge_inheritance}

Cross-stage knowledge transfer plays a crucial role in cultivating high-performance LLMs, analogous to the human cognitive process where knowledge structures are progressively built upon established foundations. The knowledge acquired in each stage provides optimized initialization parameters for the subsequent stage's submodel, thereby accelerating convergence and enhancing overall model performance.
Through strategic layer clustering and representative layer construction, the encoded knowledge in $\{\vartheta^{\mathrm{g_{n}}}\}_{n=1}^{L_s}$ can be directly utilized to update all layers within their respective groups $\{\mathrm{g_{n}}\}_{n=1}^{L_s}$, as shown in Figure~\ref{workflow}.
The rationale lies in that functionally similar layers inherently exhibit similar parameter distributions and learning patterns. Notably, we only update the LoRA parameters of each layer. This knowledge transfer process generates an updated global model, which serves as the foundation for constructing the next-stage submodel, thereby ensuring seamless knowledge transfer across stages. Moreover, we provide the convergence analysis of \our in Appendix~\ref{theoretical_appendix}.

\section{Experiments}

\subsection{Experimental Setup}\label{sec_setup}

Following  OpenFedLLM~\cite{ye2024openfedllm}, we evaluate the effectiveness of \textsc{\our} on three LLaMA-based models with different parameter scales: LLaMA2-7B~\cite{llama2}, LLaMA3.1-8B~\cite{grattafiori2024llama}, and LLaMA2-13B~\cite{llama2}. Additionally, we fine-tune these models using the Alpaca-GPT4~\cite{peng2023instruction} dataset and evaluate the performance of the federated fine-tuned models on both close-ended and open-ended benchmarks. Specifically, the close-ended benchmarks include TruthfulQA~\cite{lin2022truthfulqa}, MMLU~\cite{hendrycks2020measuring}, IFEval~\cite{zhou2023instructionfollowing}, and BBH~\cite{bbh}, which assess the models' capabilities in honesty and truthfulness, knowledge breadth, instruction following, and reasoning, respectively. The open-ended benchmarks, including Vicuna-Bench~\cite{chiang2023vicuna} and MT-Bench~\cite{zheng2024judging}, evaluate the models' performance in multi-turn dialogue scenarios. 

The fine-tuning process is divided into four stages ($S=4$) for all models, with each stage's submodel receiving an equal number of federated fine-tuning rounds. The capacity of the submodels doubles at each stage. Specifically, for LLaMA2-7B and LLaMA3.1-8B, the submodel capacities across the four stages are \{4, 8, 16, 32\}, whereas for LLaMA2-13B, they are \{5, 10, 20, 40\}. We set the hyperparameter $\beta$ to 0.1 for LLaMA2-7B and LLaMA3.1-8B, and 0.15 for LLaMA2-13B.
Additional implementation details are provided in Appendix~\ref{appendix_implement}.



\definecolor{steelbluev2}{HTML}{DAE8FC}
\definecolor{steelblue}{HTML}{82B0D2}
\definecolor{color3}{HTML}{FEFAE0}
\definecolor{my_c1}{HTML}{d1e9d2}

\begin{table*}[!t]
\caption{
\textbf{Performance evaluation of \textsc{\textbf{\our}} against baseline methods on instruction tuning tasks}.
\textbf{Bold} and \underline{underlined} values denote the best and second-best results, respectively. }
\label{tab_overall_performance}
\renewcommand{\arraystretch}{0.9} 
  \centering
  \resizebox{1\textwidth}{!}{
    \begin{tabular}{lccccccccccc}
    \toprule[1.5pt]
    \multirow{2}{*}{\textbf{Method}}&\multicolumn{5}{c}{\textbf{Close-Ended Benchmark $\uparrow$}} &
    \multicolumn{4}{c}{\textbf{Open-Ended Benchmark $\uparrow$}}\\
    \cmidrule(lr{3pt}){2-6} \cmidrule(lr{3pt}){7-10}

    & TruthfulQA & MMLU & IFEval & BBH  & \textbf{Average} & Vicuna & MT-1 & MT-2&\textbf{Average} \\
   
    \midrule[1.5pt]
    &\multicolumn{9}{c}{\textbf{LLaMA2-7B (INT4)}~\cite{llama2}} \\ 
    \midrule[1.5pt]
    FedIT&47.57 & 42.45 & 31.76 & 39.28 & 40.27 & 8.18 & 4.77 & 1.98 & 4.98 \\
    DoFIT      & 48.32 & 43.04 & 32.62 & 39.59 & 40.89 & 8.19 & 4.92 & 2.13 & 5.08\\
    C2A        &46.71 & 41.83 & 29.45 & 36.07 & 38.52 & 7.66 & 3.97 & 1.88 & 4.50 \\
    \midrule
    ProgFed    &\underline{48.60} & \underline{43.14} & 32.54 & \underline{39.73} & \underline{41.00} & 8.20 & 4.88 & 2.19 & 5.09\\
    FLoRA & 47.76 & 42.64 & 32.08 & 39.25 & 40.43 & 8.21 & 4.85 & 2.02 & 5.03\\
    FedSA-LoRA & 48.24 & 42.91 & \underline{32.71} & 39.36 & 40.81 & \underline{8.26} & \underline{5.09} & \underline{2.31} & \underline{5.22}\\
    \rowcolor{my_c1!50}
    \textsc{\our} & \textbf{50.28} & \textbf{44.15} & \textbf{33.97} & \textbf{40.93} & \textbf{42.33} & \textbf{8.41} & \textbf{5.76} & \textbf{2.92} & \textbf{5.70}\\

    \midrule[1.5pt]

    &\multicolumn{9}{c}{\textbf{LLaMA3.1-8B (INT4)}~\cite{grattafiori2024llama}} \\ 
    \midrule[1.5pt]
    FedIT & 48.07 & 63.31 & 47.32 & 62.69 & 55.35 & 8.89 & 6.54 & 5.03 & 6.82 \\
    DoFIT & 49.12 & 65.17 & 51.66 & 65.21 & 57.79 & 9.01 & 6.72 & 5.22 & 6.98\\
    C2A & 48.99 & 63.76 & 46.10 & 61.85 & 55.18 & 8.74 & 6.67 & 4.98 & 6.80\\
    \midrule
    
    ProgFed & 53.12 & 66.77 & 54.55 & 66.03 & 60.12 & \underline{9.07} & 6.85 & 5.08 & 7.00\\
    FLoRA & 50.23 & 64.95 & 50.47 & 64.93 & 57.65 & 8.96 & 6.75 & 5.11 & 6.94 \\
    FedSA-LoRA & \underline{53.29} & \underline{66.87} & \underline{56.17} & \underline{67.56} & \underline{60.97} & 9.03 & \underline{6.92} & \underline{5.41} & \underline{7.12}  \\
    \rowcolor{my_c1!50}
    \textsc{\our} & \textbf{55.23} & \textbf{68.42} & \textbf{62.29} & \textbf{71.04} & \textbf{64.25} & \textbf{9.18} & \textbf{7.63} & \textbf{6.57} & \textbf{7.79} \\

    \midrule[1.5pt]

    &\multicolumn{9}{c}{\textbf{LLaMA2-13B (INT4)}~\cite{llama2}} \\ 
    \midrule[1.5pt]
    FedIT &52.40 & 55.45 & 40.33 & 46.14 & 48.58 & 8.37 & 5.17 & 3.01 & 5.52 \\
    DoFIT & 54.77 & 56.09 & 41.68 & 46.41 & 49.74 & 8.37 & 5.19 & 3.34 & 5.63\\
    C2A   & 53.91 & 54.33 & 38.96 & 45.06 & 48.07 & 8.05 & 5.08 & 3.26 & 5.46\\
    \midrule

    ProgFed & 55.01 & 57.38 & 42.13 & 46.36 & 50.22 & 8.38 & 5.28 & 3.07 & 5.58 \\
    FLoRA & 54.26 & 56.23 & 41.49 & 46.32 & 49.58 & 8.40 & 5.22 & 3.15 & 5.59\\
    FedSA-LoRA & \underline{55.73} & \underline{57.51} & \underline{43.21} & \underline{46.91} & \underline{50.84} & \underline{8.49} & \underline{5.39} & \underline{3.45} & \underline{5.78} \\
    \rowcolor{my_c1!50}
    \textsc{\our} & \textbf{57.19} & \textbf{58.74} & \textbf{46.45} & \textbf{48.70} & \textbf{52.77} & \textbf{8.67} & \textbf{6.18} & \textbf{4.52} & \textbf{6.46} \\
   
    \bottomrule[1.5pt]
    \end{tabular}
    }
    \vspace{-4mm}
\end{table*}

\subsection{Baselines}

\textbf{Resource-Unaware Methods}.  FedIT~\citep{zhang2024towards} directly integrates LoRA with FedAvg to enable federated instruction tuning across devices. DoFIT~\citep{xu2024dofit} is a domain-aware method employing specialized LoRA weight initialization and aggregation strategies to mitigate catastrophic forgetting across different domains. C2A~\citep{kim2023client} is a hypernetwork-based approach tackling data heterogeneity by dynamically generating device-specific adapters.

\noindent \textbf{Resource-Aware Methods}.  
ProgFed~\citep{wang2022progfed} partitions the global model into blocks and gradually incorporates them for training. FLoRA~\citep{wang2024flora} allocates different LoRA ranks to devices based on their computational resources. FedSA-LoRA~\citep{guo2024selective} identifies that matrix $\mathbf{A}$ in the LoRA module captures general features, and thus only shares matrices $\mathbf{A}$ with the server to reduce resource costs.

\subsection{Performance Evaluation}

Table~\ref{tab_overall_performance} provides a comprehensive performance comparison across various methods. The results demonstrate that \our consistently outperforms baseline methods across all experimental settings. 

1) \textbf{Comparison with Resource-Unaware Methods.} Resource-unaware methods uniformly demonstrate inferior performance. In close-ended benchmarks, FedIT shows significant average performance degradation of 2.06\%, 8.9\%, and 4.19\% compared to \our on LLaMA2-7B, LLaMA3.1-8B, and LLaMA2-13B, respectively. Similarly, for open-ended benchmarks, the method exhibits average performance gaps of 0.72, 0.97, and 0.94 across these models. This performance deterioration primarily stems from the noise introduced by FedIT's independent aggregation of matrices $\mathbf{A}$ and $\mathbf{B}$. While DoFIT achieves moderate improvements through its specialized initialization and aggregation strategies, it still demonstrates a substantial performance gap of up to 10.63\% compared to \our on LLaMA3.1-8B. Furthermore, C2A performs notably worse than \our, with average performance drops of up to 9.07\% and 1.2 in close-ended and open-ended benchmarks respectively, highlighting the advantages of LoRA over adapter-based approaches.

2) \textbf{Comparison with Resource-Aware Methods.} While resource-aware methods generally demonstrate superior performance compared to resource-unaware counterparts, they still exhibit notable performance gaps relative to \our. Specifically, ProgFed shows average performance degradation of 1.33\% and 0.61 on LLaMA2-7B, 4.13\% and 0.79 on LLaMA3.1-8B, and 2.55\% and 0.88 on LLaMA2-13B for close-ended and open-ended benchmarks respectively. FedSA-LoRA exhibits similar performance degradation patterns to ProgFed, while FLoRA demonstrates more significant performance deterioration. In particular, for close-ended benchmarks, FedSA-LoRA shows average performance decrements ranging from 1.52\% to 3.28\% across these models, whereas FLoRA exhibits more substantial degradation, with decrements spanning from 1.9\% to 6.6\%.
The superior performance of \our stems from its developmental paradigm, which progressively builds a powerful LLM from a compact foundation, effectively preventing convergence to local minima.

\subsection{Efficiency Evaluation}

\definecolor{red1}{RGB}{203,104,104}
\definecolor{blue1}{RGB}{104,155,203}
\definecolor{red2}{HTML}{ffb3a7}
\definecolor{blue2}{RGB}{108,179,211}
\definecolor{uorange}{RGB}{247,175,89}
\definecolor{upurple}{RGB}{148,137,250}
\definecolor{pink1}{HTML}{FFF0F6}
\definecolor{pink2}{HTML}{FCC2D7}
\definecolor{pink3}{HTML}{FAA2CA}

\definecolor{block}{HTML}{2200ff}

\definecolor{cyan1}{HTML}{BFEFFF}
\definecolor{cyan2}{HTML}{B2DFEE}
\definecolor{cyan3}{HTML}{9AC0CD}

\definecolor{lightsteelblue1}{HTML}{CAE1FF}
\definecolor{lightsteelblue2}{HTML}{BCD2EE}

\definecolor{color1}{HTML}{f7fbff}
\definecolor{color2}{HTML}{F7FCC9}
\definecolor{color3}{HTML}{D2EFB2}
\definecolor{color4}{HTML}{F0DDCC}

\definecolor{color5}{HTML}{FECEA3}
\definecolor{color6}{HTML}{FFEDED}
\definecolor{color7}{HTML}{a4e2c6}

\definecolor{my_purple}{HTML}{eedeb0}

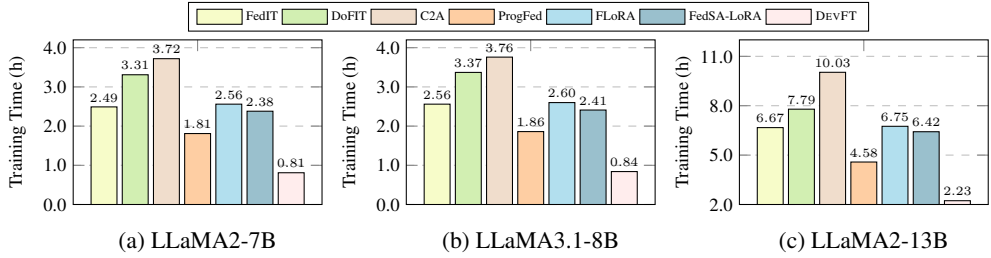
\begin{figure}[!t]
\centering
\begin{tikzpicture}
\scriptsize{

\begin{axis}[
    at={(0em,0em)},
    ymajorgrids,
    grid style=dashed,
    ylabel={\scriptsize{Training Time (h)}},
    legend style={at={(1.83,1.05)}, anchor=south, legend columns=-1, nodes={scale=0.7, transform shape}},
    ybar,
    enlarge x limits=0.3,
    xtick align=inside,
    height=.27\textwidth,
    width=.35\textwidth,
    bar width=1.4em,
    nodes near coords,
    nodes near coords align={vertical},
    nodes near coords style={font=\tiny, scale=0.8,/pgf/number format/fixed, /pgf/number format/precision=1},
    every node near coord/.append style={/pgf/number format/.cd, fixed, fixed zerofill, precision=2},
    xlabel={\footnotesize{(a) LLaMA2-7B}},
    symbolic x coords={0},
    xtick=data,
    ymin=0,
    ymax=4.2,
    ytick={0.0,1.0,2.0,3.0,4.0},
    yticklabels={0.0,1.0,2.0,3.0,4.0},
    xticklabels={},  
    ylabel style={yshift=-2.em},
    xlabel style={yshift=1em,align=center},
    yticklabel style={/pgf/number format/fixed},
]

    \addplot[fill=color2,draw=black, area legend] coordinates {(0, 2.49)};
    \addlegendentry{FedIT}
    \addplot[fill=color3, draw=black, area legend] coordinates {(0, 3.31)};
    \addlegendentry{DoFIT}
    \addplot[fill=color4,draw=black, area legend] coordinates {(0, 3.72)};
    \addlegendentry{C2A}
    \addplot[fill=color5,draw=black, area legend] coordinates {(0, 1.81)};
    \addlegendentry{ProgFed}

    \addplot[fill=cyan2,draw=black, area legend] coordinates {(0, 2.56)};
    \addlegendentry{FLoRA}

    \addplot[fill=cyan3,draw=black, area legend] coordinates {(0, 2.38)};
    \addlegendentry{FedSA-LoRA}

    \addplot[fill=color6,draw=black, area legend] coordinates {(0, 0.81)};
    \addlegendentry{\textsc{\our}}

\end{axis}

\begin{axis}[
    at={(18em,0em)},
    ymajorgrids,
    grid style=dashed,
    ylabel={\scriptsize{Training Time (h)}},
    legend style={at={(1.13,1.05)}, anchor=south, legend columns=-1, nodes={scale=0.7, transform shape}},
    ybar,
    enlarge x limits=0.3,
    xtick align=inside,
    height=.27\textwidth,
    width=.35\textwidth,
    bar width=1.4em,
    nodes near coords,
    nodes near coords align={vertical},
    nodes near coords style={font=\tiny, scale=0.8,/pgf/number format/fixed, /pgf/number format/precision=1},
    every node near coord/.append style={/pgf/number format/.cd, fixed, fixed zerofill, precision=2},
    xlabel={\footnotesize{(b) LLaMA3.1-8B}},
    symbolic x coords={0},
    xtick=data,
    ymin=0,
    ymax=4.2,
    ytick={0.0,1.0,2.0,3.0,4.0},
    yticklabels={0.0,1.0,2.0,3.0,4.0},
    xticklabels={},  
    ylabel style={yshift=-2.em},
    xlabel style={yshift=1em,align=center},
    yticklabel style={/pgf/number format/fixed},
]

    \addplot[fill=color2,draw=black, area legend] coordinates {(0, 2.56)};
    \addplot[fill=color3, draw=black, area legend] coordinates {(0, 3.37)};
    \addplot[fill=color4,draw=black, area legend] coordinates {(0, 3.76)};
    \addplot[fill=color5,draw=black, area legend] coordinates {(0, 1.86)};

    \addplot[fill=cyan2,draw=black, area legend] coordinates {(0, 2.60)};

    \addplot[fill=cyan3,draw=black, area legend] coordinates {(0, 2.41)};

    \addplot[fill=color6,draw=black, area legend] coordinates {(0, 0.84)};

\end{axis}

\begin{axis}[
    at={(36em,0em)},
    ymajorgrids,
    grid style=dashed,
    ylabel={\scriptsize{Training Time (h)}},
    legend style={at={(1.13,1.05)}, anchor=south, legend columns=-1, nodes={scale=0.7, transform shape}},
    ybar,
    enlarge x limits=0.3,
    xtick align=inside,
    height=.27\textwidth,
    width=.35\textwidth,
    bar width=1.4em,
    nodes near coords,
    nodes near coords align={vertical},
    nodes near coords style={font=\tiny, scale=0.8,/pgf/number format/fixed, /pgf/number format/precision=1},
    every node near coord/.append style={/pgf/number format/.cd, fixed, fixed zerofill, precision=2},
    xlabel={\footnotesize{(c) LLaMA2-13B}},
    symbolic x coords={0},
    xtick=data,
    ymin=2.0,
    ymax=12.0,
    ytick={2.0,5.0,8.0,11.0},
    yticklabels={2.0,5.0,8.0,11.0},
    xticklabels={},  
    ylabel style={yshift=-2.em},
    xlabel style={yshift=1em,align=center},
    yticklabel style={/pgf/number format/fixed},
]

    \addplot[fill=color2,draw=black, area legend] coordinates {(0, 6.67)};
    \addplot[fill=color3, draw=black, area legend] coordinates {(0, 7.79)};
    \addplot[fill=color4,draw=black, area legend] coordinates {(0, 10.03)};
    \addplot[fill=color5,draw=black, area legend] coordinates {(0, 4.58)};

    \addplot[fill=cyan2,draw=black, area legend] coordinates {(0, 6.75)};

    \addplot[fill=cyan3,draw=black, area legend] coordinates {(0, 6.42)};

    \addplot[fill=color6,draw=black, area legend] coordinates {(0, 2.23)};

\end{axis}

}
\end{tikzpicture}

\vspace{-1mm}
\caption{Comparative analysis of cumulative local \textit{training time} across different methods.}
\label{fig:overhead_Time}
\vspace{-3mm}
\end{figure}

\definecolor{red1}{RGB}{203,104,104}
\definecolor{blue1}{RGB}{104,155,203}
\definecolor{red2}{HTML}{ffb3a7}
\definecolor{blue2}{RGB}{108,179,211}
\definecolor{uorange}{RGB}{247,175,89}
\definecolor{upurple}{RGB}{148,137,250}
\definecolor{pink1}{HTML}{FFF0F6}
\definecolor{pink2}{HTML}{FCC2D7}
\definecolor{pink3}{HTML}{FAA2CA}

\definecolor{block}{HTML}{2200ff}

\definecolor{cyan1}{HTML}{BFEFFF}
\definecolor{cyan2}{HTML}{B2DFEE}
\definecolor{cyan3}{HTML}{9AC0CD}

\definecolor{lightsteelblue1}{HTML}{CAE1FF}
\definecolor{lightsteelblue2}{HTML}{BCD2EE}

\definecolor{color1}{HTML}{f7fbff}
\definecolor{color2}{HTML}{F7FCC9}
\definecolor{color3}{HTML}{D2EFB2}
\definecolor{color4}{HTML}{F0DDCC}

\definecolor{color5}{HTML}{FECEA3}
\definecolor{color6}{HTML}{FFEDED}
\definecolor{color7}{HTML}{a4e2c6}

\definecolor{my_purple}{HTML}{eedeb0}

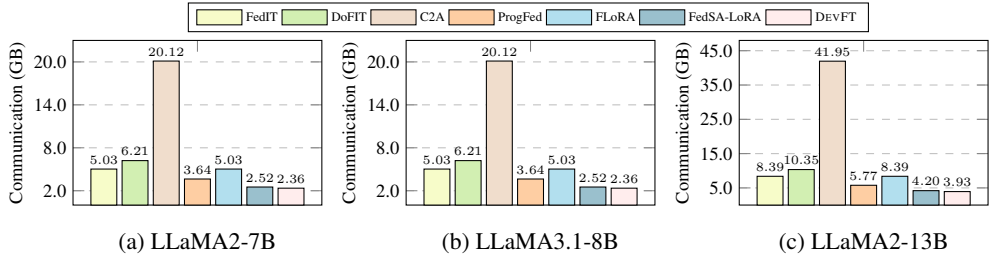
\begin{figure}[!t]
\centering
\begin{tikzpicture}
\scriptsize{

\begin{axis}[
    at={(0em,0em)},
    ymajorgrids,
    grid style=dashed,
    ylabel={\scriptsize{Communication (GB)}},
    legend style={at={(1.83,1.05)}, anchor=south, legend columns=-1, nodes={scale=0.7, transform shape}},
    ybar,
    enlarge x limits=0.3,
    xtick align=inside,
    height=.27\textwidth,
    width=.35\textwidth,
    bar width=1.4em,
    nodes near coords,
    nodes near coords align={vertical},
    nodes near coords style={font=\tiny, scale=0.8,/pgf/number format/fixed, /pgf/number format/precision=1},
    every node near coord/.append style={/pgf/number format/.cd, fixed, fixed zerofill, precision=2},
    xlabel={\footnotesize{(a) LLaMA2-7B}},
    symbolic x coords={0},
    xtick=data,
    ymin=0.0,
    ymax=23.0,
    ytick={2.0,8.0,14.0,20.0},
    yticklabels={2.0,8.0,14.0,20.0},
    xticklabels={},  
    ylabel style={yshift=-2.em},
    xlabel style={yshift=1em,align=center},
    yticklabel style={/pgf/number format/fixed},
]

    \addplot[fill=color2,draw=black, area legend] coordinates {(0, 5.03)};
    \addlegendentry{FedIT}
    \addplot[fill=color3, draw=black, area legend] coordinates {(0, 6.21)};
    \addlegendentry{DoFIT}
    \addplot[fill=color4,draw=black, area legend] coordinates {(0, 20.12)};
    \addlegendentry{C2A}
    \addplot[fill=color5,draw=black, area legend] coordinates {(0, 3.64)};
    \addlegendentry{ProgFed}

    \addplot[fill=cyan2,draw=black, area legend] coordinates {(0, 5.03)};
    \addlegendentry{FLoRA}

    \addplot[fill=cyan3,draw=black, area legend] coordinates {(0, 2.52)};
    \addlegendentry{FedSA-LoRA}

    \addplot[fill=color6,draw=black, area legend] coordinates {(0, 2.36)};
    \addlegendentry{\textsc{\our}}

\end{axis}
\begin{axis}[
    at={(18em,0em)},
    ymajorgrids,
    grid style=dashed,
    ylabel={\scriptsize{Communication (GB)}},
    legend style={at={(1.83,1.05)}, anchor=south, legend columns=-1, nodes={scale=0.7, transform shape}},
    ybar,
    enlarge x limits=0.3,
    xtick align=inside,
    height=.27\textwidth,
    width=.35\textwidth,
    bar width=1.4em,
    nodes near coords,
    nodes near coords align={vertical},
    nodes near coords style={font=\tiny, scale=0.8,/pgf/number format/fixed, /pgf/number format/precision=1},
    every node near coord/.append style={/pgf/number format/.cd, fixed, fixed zerofill, precision=2},
    xlabel={\footnotesize{(b) LLaMA3.1-8B}},
    symbolic x coords={0},
    xtick=data,
    ymin=0.0,
    ymax=23.0,
    ytick={2.0,8.0,14.0,20.0},
    yticklabels={2.0,8.0,14.0,20.0},
    xticklabels={},  
    ylabel style={yshift=-2.em},
    xlabel style={yshift=1em,align=center},
    yticklabel style={/pgf/number format/fixed},
]

    \addplot[fill=color2,draw=black, area legend] coordinates {(0, 5.03)};
    \addplot[fill=color3, draw=black, area legend] coordinates {(0, 6.21)};
    \addplot[fill=color4,draw=black, area legend] coordinates {(0, 20.12)};
    \addplot[fill=color5,draw=black, area legend] coordinates {(0, 3.64)};

    \addplot[fill=cyan2,draw=black, area legend] coordinates {(0, 5.03)};

    \addplot[fill=cyan3,draw=black, area legend] coordinates {(0, 2.52)};

    \addplot[fill=color6,draw=black, area legend] coordinates {(0, 2.36)};

\end{axis}

\begin{axis}[
    at={(36em,0em)},
    ymajorgrids,
    grid style=dashed,
    ylabel={\scriptsize{Communication (GB)}},
    legend style={at={(1.13,1.05)}, anchor=south, legend columns=-1, nodes={scale=0.7, transform shape}},
    ybar,
    enlarge x limits=0.3,
    xtick align=inside,
    height=.27\textwidth,
    width=.35\textwidth,
    bar width=1.4em,
    nodes near coords,
    nodes near coords align={vertical},
    nodes near coords style={font=\tiny, scale=0.8,/pgf/number format/fixed, /pgf/number format/precision=1},
    every node near coord/.append style={/pgf/number format/.cd, fixed, fixed zerofill, precision=2},
    xlabel={\footnotesize{(c) LLaMA2-13B}},
    symbolic x coords={0},
    xtick=data,
    ymin=0.0,
    ymax=48.0,
    ytick={5.0,15.0,25.0,35.0,45.0},
    yticklabels={5.0,15.0,25.0,35.0,45.0},
    xticklabels={},  
    ylabel style={yshift=-2.em},
    xlabel style={yshift=1em,align=center},
    yticklabel style={/pgf/number format/fixed},
]

    \addplot[fill=color2,draw=black, area legend] coordinates {(0, 8.39)};
    \addplot[fill=color3, draw=black, area legend] coordinates {(0, 10.35)};
    \addplot[fill=color4,draw=black, area legend] coordinates {(0, 41.95)};
    \addplot[fill=color5,draw=black, area legend] coordinates {(0, 5.77)};

    \addplot[fill=cyan2,draw=black, area legend] coordinates {(0, 8.39)};

    \addplot[fill=cyan3,draw=black, area legend] coordinates {(0, 4.20)};

    \addplot[fill=color6,draw=black, area legend] coordinates {(0, 3.93)};

\end{axis}

}
\end{tikzpicture}

\vspace{-2mm}
\caption{Comparative analysis of total \textit{communication overhead} across different methods.}
\label{fig:overhead_communication}
\vspace{-6mm}
\end{figure}

In this section, we evaluate the efficiency of \our from both computation and communication perspectives.
Furthermore, we present a detailed analysis of training overhead across different stages to understand how \our effectively optimizes resource utilization.

\noindent \textbf{Computation Efficiency.} Instead of using floating-point operations per second (FLOPs) to evaluate computation efficiency, we employ wall-clock training time to provide a more intuitive reflection of real-world deployment efficiency for each method. Specifically, we measure the cumulative local training time required for each method to achieve convergence, with results shown in Figure~\ref{fig:overhead_Time}. Our experimental results demonstrate that \our significantly accelerates model convergence across all model architectures. Notably, for LLaMA2-7B, \our achieves up to 4.59$\times$ speedup in convergence time. This improvement can be attributed to the progressive training strategy of \our, where initial fine-tuning of smaller submodels significantly reduces computational overhead, while  knowledge transfer to larger submodels further expedites convergence.

\noindent \textbf{Communication Efficiency.} Figure~\ref{fig:overhead_communication} illustrates the total communication overhead required for each method to reach convergence. \our consistently achieves convergence with minimal communication costs across all experimental settings, reducing communication overhead by up to 10.67$\times$ on LLaMA2-13B. This communication efficiency stems from the fact that \our only transmits a small number of LoRA parameters to the server during the initial $S-1$ stages.

\noindent \textbf{Detailed Overhead Analysis.} To gain a deeper understanding of \our's efficiency, Figure~\ref{fig:per_round_overhead} illustrates the per-round resource consumption on each device for FedIT and \textsc{\our}, including training time, communication overhead, and memory usage.
FedIT consistently exhibits high resource demands throughout the fine-tuning process. In contrast, \textsc{\our} demonstrates a more efficient pattern, with resource requirements gradually increasing as the submodel capacity expands, thereby substantially reducing the training overhead.
In the early stages, particularly during the first stage, \textsc{\our} achieves significant resource savings compared to FedIT, reducing the per-round training time by 10.3$\times$, communication overhead by 4$\times$, and memory usage by 4$\times$.
Intriguingly, we discover that fine-tuning the reconstructed models of \textsc{\our} at each stage also yields acceleration compared to directly fine-tuning pre-trained models via API calls. For example, even in the fourth stage where the submodel grows to match the target model size, \textsc{\our} still achieves a 1.44$\times$ speedup per round.

\definecolor{Maroon}{HTML}{AE3135}
\definecolor{BLUE}{HTML}{6466AE}
\definecolor{my-green}{HTML}{8ECFC9}
\definecolor{my-yellow}{HTML}{FFBE7A}
\definecolor{my-blue}{HTML}{82B0D2}
\definecolor{my_c1}{HTML}{f1c550}
\definecolor{my_c2}{HTML}{1687a7}

\begin{figure}[!t]
\centering
\pgfplotsset{width=0.32\linewidth,height=0.25\linewidth,compat=1.15}
\footnotesize
\begin{tikzpicture}
\scriptsize{
\begin{axis}[
	at={(0em,0em)},
    xlabel={Round},
    ylabel={\tiny{Training Time (s)}},
    xmin=0.05, xmax=0.55,
    ymin=0, ymax=16,
    xtick={0.1, 0.2, 0.3, 0.4, 0.5},
    ytick={0, 5, 10, 15},
    ymajorgrids=true,
    xmajorgrids=true,
    grid style=dashed,
    xticklabels={0, 75, 150, 225, 300},
    x label style={at={(axis description cs:0.5,-0.15)},anchor=north},
    y label style={at={(axis description cs:-0.15,0.5)},anchor=south},
    legend style={
    	at={(0.35,0.45)},
    	anchor=south,
    	legend columns=1,
    	nodes={scale=0.8, transform shape}}
]
\addplot[
    color=my_c1,
    mark=square*,
    mark size=1.5pt,thick,line width=1.5pt,
    mark options={fill=my_c1,draw=my_c1,line width=1.8pt}
    ]
    coordinates {
    (0.1, 14.88)
    (0.2, 14.88)
    (0.3, 14.88)
    (0.4, 14.88)
    (0.5, 14.88)
    };
    \addlegendentry{FedIT}
\addplot[
    color=my_c2,
    mark=pentagon*,
    mark size=1.5pt,thick,line width=1.5pt,
    mark options={fill=my_c2,draw=my_c2,line width=1.8pt}
    ]
    coordinates {
    (0.1, 2.88/2)
    (0.2, 2.88/2)
    (0.2, 5.232/2)
    (0.3, 5.232/2)
    (0.3, 10.224/2)
    (0.4, 10.224/2)
    (0.4, 20.688/2)
    (0.5, 20.688/2)
    };
    \addlegendentry{\textsc{\our}}
    
\end{axis}

\begin{axis}[
	at={(16em,0em)},
    xlabel={Round},
    ylabel={\tiny{Communication (MB)}},
    xmin=0.05, xmax=0.55,
    ymin=0, ymax=9,
    xtick={0.1, 0.2, 0.3, 0.4, 0.5},
    ytick={0, 4, 8},
    ymajorgrids=true,
    xmajorgrids=true,
    grid style=dashed,
    xticklabels={0, 75, 150, 225 , 300},
    x label style={at={(axis description cs:0.5,-0.15)},anchor=north},
    y label style={at={(axis description cs:-0.15,0.5)},anchor=south},
    legend style={
    	at={(0.33,0.52)},
    	anchor=south,
    	legend columns=1,
    	nodes={scale=0.7, transform shape}}
]
\addplot[
    color=my_c1,
    mark=square*,
    mark size=1.5pt,thick,line width=1.5pt,
    mark options={fill=my_c1,draw=my_c1,line width=1.8pt}
    ]
    coordinates {
    (0.1, 8.6016)
    (0.2, 8.6016)
    (0.3, 8.6016)
    (0.4, 8.6016)
    (0.5, 8.6016)
    };
    \addlegendentry{FedIT}

\addplot[
    color=my_c2,
    mark=pentagon*,
    mark size=1.5pt,thick,line width=1.5pt,
    mark options={fill=my_c2,draw=my_c2,line width=1.8pt}
    ]
    coordinates {
    (0.1, 8.6016/8)
    (0.2, 8.6016/8)
    (0.2, 8.6016/4)
    (0.3, 8.6016/4)
    (0.3, 8.6016/2)
    (0.4, 8.6016/2)
    (0.4, 8.6016)
    (0.5, 8.6016)
    };
    \addlegendentry{\textsc{\our}}
\end{axis}

\begin{axis}[
	at={(32em,0em)},
    xlabel={Round},
    ylabel={\tiny{Memory Usage (GB)}},
    xmin=0.05, xmax=0.55,
    ymin=0, ymax=9,
    xtick={0.1, 0.2, 0.3, 0.4, 0.5},
    ytick={0, 4, 8},
    ymajorgrids=true,
    xmajorgrids=true,
    grid style=dashed,
    xticklabels={0, 75, 150, 225 , 300},
    x label style={at={(axis description cs:0.5,-0.15)},anchor=north},
    y label style={at={(axis description cs:-0.15,0.5)},anchor=south},
    legend style={
    	at={(0.33,0.52)},
    	anchor=south,
    	legend columns=1,
    	nodes={scale=0.7, transform shape}}
]
\addplot[
    color=my_c1,
    mark=square*,
    mark size=1.5pt,thick,line width=1.5pt,
    mark options={fill=my_c1,draw=my_c1,line width=1.8pt}
    ]
    coordinates {
    (0.1, 8.188)
    (0.2, 8.188)
    (0.3, 8.188)
    (0.4, 8.188)
    (0.5, 8.188)
    };
    \addlegendentry{FedIT}

\addplot[
    color=my_c2,
    mark=pentagon*,
    mark size=1.5pt,thick,line width=1.5pt,
    mark options={fill=my_c2,draw=my_c2,line width=1.8pt}
    ]
    coordinates {
    (0.1, 8.188/8)
    (0.2, 8.188/8)
    (0.2, 8.188/4)
    (0.3, 8.188/4)
    (0.3, 8.188/2)
    (0.4, 8.188/2)
    (0.4, 8.188)
    (0.5, 8.188)
    };
    \addlegendentry{\textsc{\our}}

\end{axis}
}
\end{tikzpicture}
\vspace{-2mm}
\caption{Resource consumption analysis of a device per round: \textit{training time, communication overhead, and memory usage} for FedIT and \textsc{\our}. The global model is LLaMA2-7B.}
\vspace{-5mm}
\label{fig:per_round_overhead}
\end{figure}
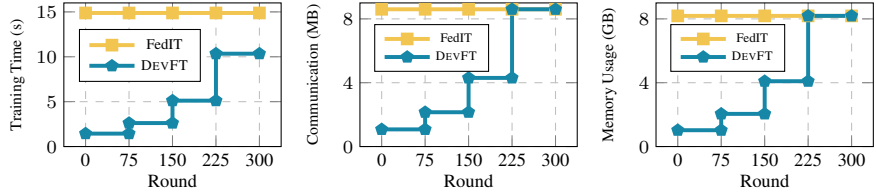

\begin{figure}[!h]
\vspace{-7mm}

    \centering
    \begin{minipage}[c]{0.48\textwidth}
    \centering
    \begin{table}[H]
    \caption{Ablation study on different layer grouping strategies.}
    \label{tab:ablation_technique1}
    \resizebox{\linewidth}{!}{
    \begin{tabular}{lccccccccc}
    \toprule[1.5pt]
    \multirow{2}{*}{\textbf{Method}} & \multicolumn{5}{c}{\textbf{Close-Ended Benchmark $\uparrow$}} \\
    \cmidrule(lr{3pt}){2-6}
    & TruthfulQA & MMLU & IFEval & BBH  & \textbf{Average} \\
    \midrule[1.5pt]
    &\multicolumn{5}{c}{\textbf{LLaMA2-7B (INT4)}~\cite{llama2}} \\ 
    \midrule[1.5pt]
    \rowcolor{my_black!40}
    DGLG            & \textbf{50.28} & \textbf{44.15} & \textbf{33.97} & \textbf{40.93} & \textbf{42.33} \\
    \textsc{Random} & 47.89 & 42.09 & 29.18 & 38.45 & 39.90 (\textcolor{red}{$\downarrow$ 2.43}) \\
    \textsc{Even}   & 45.41 & 39.83 & 25.04 & 36.73 & 36.25 (\textcolor{red}{$\downarrow$ 6.08})\\
    
    \midrule[1.5pt]
    
    &\multicolumn{5}{c}{\textbf{LLaMA3.1-8B (INT4)}~\cite{llama2}} \\ 
    \midrule[1.5pt]
    \rowcolor{my_black!40}
    DGLG            & \textbf{55.23} & \textbf{68.42} & \textbf{62.29} & \textbf{71.04} & \textbf{64.25} \\
    \textsc{Random} & 51.02 & 66.74 & 54.89 & 70.11 & 60.69 (\textcolor{red}{$\downarrow$ 3.56}) \\
    \textsc{Even}   & 48.51 & 62.50 & 50.01 & 70.03 & 57.76 (\textcolor{red}{$\downarrow$ 6.49}) \\
    \bottomrule[1.5pt]
    \end{tabular}
    }
	\end{table}
    \end{minipage}
\hfill 
\begin{minipage}[c]{0.48\textwidth}
    \centering
    \begin{table}[H]
    \caption{Ablation study on different representative layer construction methods.}
    \label{tab:ablation_technique2}
    \resizebox{\linewidth}{!}{
    \begin{tabular}{lccccccccc}
    \toprule[1.5pt]
    \multirow{2}{*}{\textbf{Method}}&\multicolumn{5}{c}{\textbf{Close-Ended Benchmark $\uparrow$}} \\
    \cmidrule(lr{3pt}){2-6} \cmidrule(lr{3pt}){7-8}

    & TruthfulQA & MMLU & IFEval & BBH  & \textbf{Average} \\
   
    \midrule[1.5pt]
    &\multicolumn{5}{c}{\textbf{LLaMA2-7B (INT4)}~\cite{llama2}} \\ 
    \midrule[1.5pt]
    \rowcolor{my_black!40}
    DBLF            & \textbf{50.28} & \textbf{44.15} & \textbf{33.97} & \textbf{40.93} & \textbf{42.33} \\
    \textsc{R-One} & 46.75 & 40.13 & 26.38 & 37.62 & 37.72 (\textcolor{red}{$\downarrow$ 4.61}) \\
    \textsc{Sum}    & 48.15 & 42.91 & 30.69 & 39.84 & 40.90 (\textcolor{red}{$\downarrow$ 1.43})\\
    \midrule[1.5pt]
    &\multicolumn{5}{c}{\textbf{LLaMA3.1-8B (INT4)}~\cite{llama2}} \\ 
    \midrule[1.5pt]
    \rowcolor{my_black!40}
    DBLF            & \textbf{55.23} & \textbf{68.42} & \textbf{62.29} & \textbf{71.04} & \textbf{64.25} \\
    \textsc{R-One} & 47.51 & 57.33 & 50.21 & 58.09 & 53.29 (\textcolor{red}{$\downarrow$ 10.96})\\
    \textsc{Sum}    & 52.74 & 65.18 & 58.47 & 68.39 & 61.20 (\textcolor{red}{$\downarrow$ 3.05})\\
    \bottomrule[1.5pt]
    \end{tabular}
    }
    \end{table}
\end{minipage}
\vspace{-4mm}
\end{figure}

\subsection{Ablation Study}

\textbf{Effect of the Deconfliction-guided Layer Grouping Mechanism.} To understand the significance of the deconfliction-guided layer grouping (DGLG) mechanism, we compare it with two baseline variants: random grouping (denoted as \textsc{Random}) and even grouping (denoted as \textsc{Even}).
As shown in Table~\ref{tab:ablation_technique1}, DGLG consistently outperforms both baselines across all experimental settings.
Specifically, for LLaMA2-7B, compared to DGLG, \textsc{Random} and \textsc{Even} exhibit average performance degradation of 2.43\% and 6.08\%, respectively. Similar trends are observed on LLaMA3.1-8B, where \textsc{Random} and \textsc{Even} result in average performance drops of 3.56\% and 6.49\%, respectively. These results demonstrate that the DGLG mechanism effectively enhances the layer fusion process by clustering layers with minimal parameter conflicts into the same group.

\noindent \textbf{Effect of the Differential-based Layer Fusion Strategy.} To evaluate the effectiveness of the differential-based layer fusion (DBLF) strategy, we compare it with two baseline variants: \textsc{R-One}, which randomly selects one layer from each group as the representative layer, and \textsc{Sum}, which directly performs the addition operation on all layers within each group to generate the representative layer.
As shown in Table~\ref{tab:ablation_technique2}, DBLF consistently outperforms both baselines. On LLaMA2-7B, \textsc{R-One} and \textsc{Sum} show average performance drops of 4.61\% and 1.43\% respectively, compared to DBLF. 
The performance disparity further widens on LLaMA3.1-8B, where \textsc{R-One} and \textsc{Sum} exhibit larger performance gaps of 10.96\% and 3.05\%, respectively. 
These results demonstrate that DBLF can effectively capture and integrate the unique semantic information from layers within each group.

\vspace{-2mm}
\subsection{Analysis}

\textbf{Compatibility with Existing Methods.} We further conduct experiments to validate the compatibility of \textsc{\our} with existing methods. 
We select two representative approaches, FedIT and FedSA-LoRA, to evaluate the impact of integrating \textsc{\our} on model performance and system efficiency.
Table~\ref{tab_Compatibility} shows that the integration of \textsc{\our} consistently yields improvements across multiple evaluation dimensions. 
For example, integrating \textsc{\our} with FedIT on LLaMA2-13B yields a 3.7\% average performance improvement, 2.9$\times$ faster convergence, and a 2.13$\times$ reduction in communication overhead. 
Similar performance gains are also achieved when combining \textsc{\our} with FedSA-LoRA. 
These experimental results demonstrate that \textsc{\our} serves as a versatile framework that can be effectively combined with existing methods while maintaining their inherent advantages.

\definecolor{steelbluev2}{HTML}{DAE8FC}
\definecolor{steelblue}{HTML}{82B0D2}
\definecolor{color3}{HTML}{FEFAE0}
\definecolor{my_green}{HTML}{00CD00}

\begin{table*}[!t]
\vspace{-2mm}
\caption{Evaluation of \textsc{\our}'s compatibility with existing methods.}
\vspace{-1mm}
\label{tab_Compatibility}
\renewcommand{\arraystretch}{0.9} 
  \centering
  \resizebox{1\textwidth}{!}{
    \begin{tabular}{lccccccccc}
    \toprule[1.5pt]
    \multirow{2}{*}{\textbf{Method}}&\multicolumn{5}{c}{\textbf{Close-Ended Benchmark $\uparrow$}} &
    \multicolumn{2}{c}{\textbf{Resource $\downarrow$}}\\
    \cmidrule(lr{3pt}){2-6} \cmidrule(lr{3pt}){7-8}

    & TruthfulQA & MMLU & IFEval & BBH  & \textbf{Average} & \textbf{Time (h)} & \textbf{Comm. (GB)} \\
   
    \midrule[1.5pt]
    &\multicolumn{5}{c}{\textbf{LLaMA2-7B (INT4)}~\cite{llama2}} \\ 
    \midrule[1.5pt]
    FedIT                                     & 47.57 & 42.45 & 31.76 & 39.28 & 40.27 & 2.49 & 5.03 \\
    FedIT+\our      & \textbf{49.86} & \textbf{43.87} & \textbf{33.65} & \textbf{40.79} & \textbf{42.04} (\textcolor{my_green}{{$\uparrow$ 1.77}}) & \textbf{0.83} (\textcolor{my_green}{{$\times$3.00}}) & \textbf{2.36} (\textcolor{my_green}{{$\times$2.13}}) \\
    \midrule
    FedSA-LoRA                                & 48.24 & 42.91 & 32.71 & 39.36 & 40.81 & 2.38 & 2.52 \\
    FedSA-LoRA+\our & \textbf{50.42} & \textbf{44.57} & \textbf{40.92} & \textbf{41.36} & \textbf{44.32} (\textcolor{my_green}{{$\uparrow$ 3.51}}) & \textbf{0.72} (\textcolor{my_green}{{$\times$3.31}}) & \textbf{1.18} (\textcolor{my_green}{{$\times$2.14}})\\

    \midrule[1.5pt]

    &\multicolumn{5}{c}{\textbf{LLaMA2-13B (INT4)}~\cite{llama2}} \\ 
    \midrule[1.5pt]
    FedIT                                     & 52.40 & 55.45 & 40.33 & 46.14 & 48.58 & 6.67 & 8.39 \\
    FedIT+\our      & \textbf{56.84} & \textbf{58.26} & \textbf{45.49} & \textbf{48.52} & \textbf{52.28} (\textcolor{my_green}{{$\uparrow$ 3.70}}) & \textbf{2.30} (\textcolor{my_green}{{$\times$2.90}}) & \textbf{3.93} 
    (\textcolor{my_green}{{$\times$2.13}}) \\
    \midrule
    FedSA-LoRA                                & 55.73 & 57.51 & 43.21 & 46.91 & 50.84 & 6.42 & 4.20 \\
    FedSA-LoRA+\our & \textbf{57.61} & \textbf{59.25} & \textbf{47.63} & \textbf{49.13} & \textbf{53.41} (\textcolor{my_green}{{$\uparrow$ 2.57}}) & \textbf{2.19} (\textcolor{my_green}{{$\times$2.93}}) & \textbf{1.97} (\textcolor{my_green}{{$\times$2.13}}) \\
    \bottomrule[1.5pt]
    \end{tabular}
    }

\vspace{-5mm}
\end{table*}

\begin{figure}[!t]
    \centering
    \vspace{-4mm}
    \begin{minipage}[c]{0.48\textwidth}
        \centering
        \begin{table}[H]
        \caption{Performance analysis of different initial submodel capacities.}
        \label{tab:start_Point}
        \resizebox{\linewidth}{!}{
        \begin{tabular}{cccccccccc}
        \toprule[1.5pt]
        \textbf{Initial}&\multicolumn{5}{c}{\textbf{Close-Ended Benchmark $\uparrow$}} \\
        \cmidrule(lr{3pt}){2-6} \cmidrule(lr{3pt}){7-8}
    
        \textbf{Capacity}& TruthfulQA & MMLU & IFEval & BBH  & \textbf{Average} \\
        \midrule[1.5pt]
    
        &\multicolumn{5}{c}{\textbf{LLaMA3.1-8B (INT4)}~\cite{llama2}} \\ 
        \midrule[1.5pt]
        1  & 52.45 & 66.85 & 56.83 & 70.12 & 61.56 (\textcolor{red}{$\downarrow$ 2.69}) \\   
        2  & 53.87 & 67.31 & 59.45 & 70.50 & 62.78 (\textcolor{red}{$\downarrow$ 1.47}) \\   
        \rowcolor{my_black!40}
        4  & \textbf{55.23} & \textbf{68.42} & \textbf{62.29} & \textbf{71.04} & \textbf{64.25} \\   
        8  & 53.21 & 67.12 & 58.35 & 70.65 & 62.33 (\textcolor{red}{$\downarrow$ 1.92}) \\   
        16 & 51.08 & 65.89 & 54.12 & 70.01 & 60.28 (\textcolor{red}{$\downarrow$ 3.97}) \\  
        32 & 48.79 & 64.49 & 49.75 & 69.33 & 58.09 (\textcolor{red}{$\downarrow$ 6.16})\\
        \bottomrule[1.5pt]
        \end{tabular}
        }
        \end{table}
    \end{minipage}
\hfill 
\begin{minipage}[c]{0.48\textwidth}
    \centering
    \begin{table}[H]
    \caption{Performance analysis under varying submodel growth rates.}
    \label{tab:scaling_factor}
    \renewcommand{\arraystretch}{0.9} 

    \resizebox{\linewidth}{!}{
    \begin{tabular}{cccccccccc}
    \toprule[1.5pt]
    \textbf{Growth}&\multicolumn{5}{c}{\textbf{Close-Ended Benchmark $\uparrow$}} \\
    \cmidrule(lr{3pt}){2-6} \cmidrule(lr{3pt}){7-8}

    \textbf{Rate}& TruthfulQA & MMLU & IFEval & BBH  & \textbf{Average} \\
   
    \midrule[1.5pt]
    &\multicolumn{5}{c}{\textbf{LLaMA2-7B (INT4)}~\cite{llama2}} \\ 
    \midrule[1.5pt]
    \rowcolor{my_black!40}
    2 & \textbf{50.28} & \textbf{44.15} & \textbf{33.97} & \textbf{40.93} & \textbf{42.33} \\
    4 & 47.96 & 42.56 & 29.87 & 38.79 & 39.80 (\textcolor{red}{$\downarrow$ 2.53}) \\
    8 & 45.68 & 40.07 & 25.63 & 36.92 & 37.08 (\textcolor{red}{$\downarrow$ 5.25})\\

    \midrule[1.5pt]
    &\multicolumn{5}{c}{\textbf{LLaMA2-13B (INT4)}~\cite{llama2}} \\ 
    \midrule[1.5pt]
    \rowcolor{my_black!40}
    2 & \textbf{57.19} & \textbf{58.74} & \textbf{46.45} & \textbf{48.70} & \textbf{52.77} \\
    4 & 52.23 & 56.78 & 34.56 & 42.29 & 46.47 (\textcolor{red}{$\downarrow$ 6.3}) \\
    8 & 48.12 & 52.33 & 26.78 & 37.45 & 41.17 (\textcolor{red}{$\downarrow$ 11.6})\\
    \bottomrule[1.5pt]
    \end{tabular}
    }
    \end{table}
\end{minipage}
\vspace{-4mm}
\end{figure}

\textbf{Impact of Initial Submodel Capacity.} We also conduct experiments to investigate how the initial capacity of submodels affects the overall model performance. Specifically, we experiment with LLaMA3.1-8B and set different initial capacities \{1,2,4,8,16,32\}, while maintaining the same total training budget. The submodel capacity also doubles progressively until reaching the full model capacity. Table~\ref{tab:start_Point} shows that the model achieves optimal performance when the initial capacity is set to 4, while either smaller or larger initial capacities result in performance degradation. This phenomenon is analogous to human learning, where starting from either too early (infancy) or too late (adult) may lead to suboptimal outcomes due to premature or delayed cognitive development.

\textbf{Impact of Submodel Growth Rate.} 
Finally, we explore how different submodel growth rates affect overall performance.
Specifically, we experiment with diverse capacity scaling multipliers \{2,4,8\}.
For instance, a multiplier of 4 indicates that the submodel capacity quadruples at each stage until reaching the full capacity.
This generates capacity sequences of \{4$\to$16$\to$32\} for LLaMA2-7B and \{5$\to$20$\to$40\} for LLaMA2-13B.
Table~\ref{tab:scaling_factor} demonstrates that higher growth rates significantly compromise model performance. For LLaMA2-7B, scaling multipliers of 4 and 8 lead to average performance drops of 2.53\% and 5.25\% respectively. The degradation is even more pronounced for LLaMA2-13B, with decreases of 6.3\% and 11.6\%. This performance deterioration can be attributed to abrupt capacity transitions, which may disrupt the construction of the knowledge structure. 
This phenomenon mirrors natural learning processes, where steady, incremental development typically yields better long-term outcomes compared to the aggressive pursuit of short-term performance gains.

\vspace{-2mm}
\section{Conclusion}
In this paper, we propose \our, an innovative federated fine-tuning approach that reduces 
the resource consumption of LLM fine-tuning via progressive model training. 
Specifically, \our decomposes the fine-tuning process into several developmental stages, where each stage focuses on adapting a submodel with increasing parameter capacity. To efficiently architect the submodels for each stage, \our introduces two novel techniques: a deconfliction-guided layer grouping mechanism and a differential-based layer fusion strategy. Extensive experiments on multiple benchmark datasets demonstrate the effectiveness and efficiency of \our.

\bibliographystyle{plain}
\bibliography{ref}


\newpage
\appendix

\newtheorem{lemma}{Lemma}

\section{Theoretical Convergence Analysis}\label{theoretical_appendix}


In this section, we establish a rigorous theoretical analysis of the convergence properties for \our. Under standard assumptions of smoothness and bounded variance, we characterize both the intra-stage convergence behavior of progressively growing submodels and the inter-stage transition dynamics. Our analysis yields explicit convergence rates and theoretical guarantees, extending the classical federated optimization framework~\cite{li2022federated, li2019convergence, wang2022progfed, wu2025breaking}.

\subsection{Preliminaries and Assumptions}
Let $f(\theta)\!=\!\mathbb{E}_{\xi}[F(\theta;\xi)]$ be the expected loss of the full $L$‐layer model, where $\xi$ denotes a random data sample drawn from the data distribution. At stage $s$, the server constructs a submodel of depth $L_s$ with parameters $\theta^{(s)}\in \mathbb{R}^{d_s}$. Each device $i\in [1,N]$ has access to its local loss function $f_i(\theta^{(s)})\!=\!\mathbb{E}_{\xi\sim\mathcal{D}_i}[F(\theta^{(s)};\xi)]$, and the global objective is
\begin{equation}
    F_s(\theta^{(s)}) \;=\; \frac{1}{N}\sum_{i=1}^N f_i(\theta^{(s)}).
\end{equation}
We make the following standard assumptions, which are widely used in standard federated learning convergence analyses~\cite{li2022federated, li2019convergence}:
\begin{enumerate}
  \item (Smoothness) Each $f_i$ is $L$‐smooth: for all $\theta,\theta'$,
  \begin{equation}
  	\|\nabla f_i(\theta) - \nabla f_i(\theta')\|\leq L\|\theta-\theta'\|.
  \end{equation}
  \item (Unbiased Stochastic Gradients) The stochastic gradient $g_i(\theta;\xi)$ satisfies
  \begin{equation}
  	\mathbb{E}[g_i(\theta;\xi)]=\nabla f_i(\theta), \quad
  	\mathbb{E}\|g_i(\theta;\xi)-\nabla f_i(\theta)\|^2\leq\sigma^2.
  \end{equation}
  \item (Bounded Dissimilarity) There exists $G^2$ such that
  \begin{equation}
  	\frac{1}{N}\sum_{i=1}^N\|\nabla f_i(\theta)\|^2 \leq G^2 + \|\nabla F_s(\theta)\|^2.
  \end{equation}
\end{enumerate}

\subsection{Per‐Stage Convergence}
We first analyze the convergence of the federated fine‐tuning process at a fixed stage $s$, where $\theta^{(s)}_t$ denotes the global submodel after $t$ communication rounds. At each round, participating devices perform $K$ local gradient decent steps with learning rate $\eta$ and then average the updates. Under the assumptions above, classical results for FedAvg~\cite{li2019convergence} yield:
\begin{equation}
\label{eq:fedavg_rate}
  \frac{1}{T}\sum_{t=0}^{T-1}\mathbb{E}\|\nabla F_s(\theta^{(s)}_t)\|^2
  \;\le\; \frac{2\bigl(F_s(\theta^{(s)}_0)-F_s^*\bigr)}{\eta K T}
      + \frac{L\eta\sigma^2}{N} + \frac{12L^2\eta^2 K^2 G^2}{T},
\end{equation}
where $F_s^*=\min_{\theta}F_s(\theta)$. In particular, setting $\eta=O\bigl(1/\sqrt{T}K\bigr)$ balances the first and third terms, giving
\begin{equation}
  \frac{1}{T}\sum_{t=0}^{T-1}\mathbb{E}\|\nabla F_s(\theta^{(s)}_t)\|^2
  = O\Bigl(\frac{1}{\sqrt{T}K}\Bigr) + O\Bigl(\frac{1}{\sqrt{T}N}\Bigr).
\end{equation}
Thus, to achieve an $\varepsilon$‐stationary solution, it suffices that
$T = O\bigl(1/(\varepsilon^2 K)\bigr)$ and $N \ge O(1/\varepsilon^2)$.

\subsection{Knowledge Transfer}
At the end of stage $s$, submodel parameters $\theta^{(s)}_T$ are fused back into the full model via representative layer updates (Section~\ref{sec_Knowledge_inheritance}). Concretely, for each group $\mathrm{g}_n$, we update the LoRA parameters of layers in the full model by
\begin{equation}
   \theta_{j}^{\text{new}}
   = \theta_{\text{anchor}}^{(s)}
   + \beta\sum_{k\in \mathrm{g}_n}\bigl(\theta^{(s)}_{k}-\theta_{\text{anchor}}^{(s)}\bigr),
   \quad j\in \mathrm{g}_n,
\end{equation}
where $\theta_{\text{anchor}}^{(s)}$ denotes the anchor parameter in group $\mathrm{g}_n$. By design, this update preserves the submodel optimum while initializing the next stage submodel $\theta^{(s+1)}_0$ close to $\theta^{(s)}_T$. We quantify this closeness in the following lemma:

\begin{lemma}
Under the layer fusion strategy of Section~\ref{sec_LAIF}, the initialization error for stage $s+1$ satisfies
\begin{equation}
  \bigl\|\theta^{(s+1)}_0 - \theta^{(s)}_T\bigr\|
  \;\le\; C\beta \sum_{n=1}^{L_s}
    \sum_{j,k\in \mathrm{g}_n}\|\theta^{(s)}_j - \theta^{(s)}_k\|,
\end{equation}
for some constant $C>0$ depending on grouping size. Moreover, since layers in each group share high similarity by construction (cf.\ Equation~\eqref{Eq_Similarity}), this bound scales as
\begin{equation}
  \bigl\|\theta^{(s+1)}_0 - \theta^{(s)}_T\bigr\|
  = O\bigl(\beta\,\delta_s\bigr),
  \quad \delta_s = \max_{j,k\in \mathrm{g}_n} \|\theta^{(s)}_j - \theta^{(s)}_k\|.
\end{equation}
\end{lemma}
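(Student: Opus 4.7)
The plan is to unfold the knowledge transfer layer by layer within each group, isolate the displacement produced by the differential fusion via the triangle inequality, and then aggregate across groups. Throughout, I would treat $\theta^{(s)}_T$ and $\theta^{(s+1)}_0$ as parameter blocks indexed by the layers of the full model, so that a component-wise difference is meaningful; concretely, the knowledge-transfer map lifts the submodel parameters at the end of stage $s$ to the full model by broadcasting each group's representative, and this lifted object is what I compare against the next-stage initialization.

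First, I would fix a group $\mathrm{g}_n$ with anchor $\theta^{(s)}_{\text{anchor},n}$ and recall the explicit fusion rule
\begin{equation*}
\theta_j^{\text{new}} = \theta^{(s)}_{\text{anchor},n} + \beta\sum_{k \in \mathrm{g}_n}\bigl(\theta^{(s)}_k - \theta^{(s)}_{\text{anchor},n}\bigr), \quad j \in \mathrm{g}_n.
\end{equation*}
Subtracting the component of $\theta^{(s)}_T$ at layer $j$ (namely $\theta^{(s)}_j$) and applying the triangle inequality, one obtains
\begin{equation*}
\|\theta_j^{\text{new}} - \theta_j^{(s)}\| \le \|\theta^{(s)}_{\text{anchor},n} - \theta^{(s)}_j\| + \beta\sum_{k \in \mathrm{g}_n}\|\theta^{(s)}_k - \theta^{(s)}_{\text{anchor},n}\|.
\end{equation*}
Both terms on the right are instances of pairwise within-group differences, so summing over $j \in \mathrm{g}_n$ and over groups $n = 1,\dots,L_s$, and choosing $C$ proportional to $\max_n |\mathrm{g}_n|$, yields the first displayed inequality after a single use of the triangle inequality to convert the per-layer norms into the global $\|\cdot\|$.

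For the second bound, I would substitute the worst-case within-group pairwise difference $\delta_s$ into every summand, noting that the total number of pairs across all groups is at most $\sum_n |\mathrm{g}_n|^2 \le L\cdot\max_n |\mathrm{g}_n|$. The high within-group similarity guaranteed by the DGLG mechanism (Equation~\eqref{Eq_Similarity}) is precisely what forces $\delta_s$ to be small in practice, so the bound collapses to $O(\beta\,\delta_s)$ once the combinatorial factor is absorbed into the hidden constant.

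The main obstacle is reconciling the dimensions of $\theta^{(s+1)}_0$ and $\theta^{(s)}_T$, because $L_{s+1} > L_s$. The natural comparison must either lift $\theta^{(s)}_T$ via the knowledge-transfer map to the full-model parameter space, or project $\theta^{(s+1)}_0$ back through the stage-$(s+1)$ grouping; either choice is defensible, but the resulting constant $C$ depends implicitly on how the grouping at stage $s+1$ interacts with that at stage $s$. Extracting a clean, interpretable constant will require a structural assumption that layers deemed similar at stage $s$ remain jointly grouped at stage $s+1$, and articulating this carefully is the most delicate part of the write-up; the algebraic manipulations above are then essentially immediate.
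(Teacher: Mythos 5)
Your per-layer decomposition does not deliver the stated bound. Writing $\theta^{\text{new}}_j - \theta^{(s)}_j = (\theta^{(s)}_{\text{anchor},n} - \theta^{(s)}_j) + \beta\sum_{k\in\mathrm{g}_n}(\theta^{(s)}_k - \theta^{(s)}_{\text{anchor},n})$ and applying the triangle inequality leaves the first term $\|\theta^{(s)}_{\text{anchor},n} - \theta^{(s)}_j\|$ with \emph{no} factor of $\beta$. Summing over $j$ and over groups therefore yields a bound of the form $\sum_n\sum_{j}\|\theta^{(s)}_j - \theta^{(s)}_{\text{anchor},n}\| + \beta\,(\cdots)$, i.e.\ $O(\delta_s) + O(\beta\,\delta_s)$, not $C\beta\sum_{n}\sum_{j,k\in\mathrm{g}_n}\|\theta^{(s)}_j-\theta^{(s)}_k\|$ and not $O(\beta\delta_s)$. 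The $\beta$-free term cannot be absorbed into $C\beta$ unless $C$ grows like $1/\beta$, which destroys exactly the property the lemma is used for downstream: in the cross-stage analysis the transfer error must be tunable to $O(\varepsilon^2)$ through the product $\beta\delta_s$. So, as written, your "summing yields the first displayed inequality" step is where the argument breaks.

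The paper sidesteps this by choosing a different comparison inside each group: it contrasts the $\beta$-weighted fusion $\theta^{(s)}_{\text{anchor},n} + \beta\sum_{j\in\mathrm{g}_n}(\theta^{(s)}_j - \theta^{(s)}_{\text{anchor},n})$ with the corresponding \emph{unweighted} aggregate $\theta^{(s)}_{\text{anchor},n} + \sum_{k\in\mathrm{g}_n}(\theta^{(s)}_k - \theta^{(s)}_{\text{anchor},n})$, so the anchor cancels identically and the entire difference collapses to the single term $(\beta-1)\sum_{j\in\mathrm{g}_n}(\theta^{(s)}_j - \theta^{(s)}_{\text{anchor},n})$, bounded by $|\beta-1|\,|\mathrm{g}_n|\max_{j,k}\|\theta^{(s)}_j - \theta^{(s)}_k\|$ and then summed over groups. (The paper is itself loose here: its proof produces a factor $|\beta-1|$ rather than the $\beta$ claimed in the statement, so the asserted $O(\beta\delta_s)$ scaling is not fully justified even by the paper's own argument.) Your observation that $\theta^{(s+1)}_0$ and $\theta^{(s)}_T$ live in different parameter spaces and that an explicit lifting or identification is needed is legitimate --- the paper glosses over it --- but it is orthogonal to the real defect: with the layer-wise comparison you chose, the anchor-to-layer term survives without a $\beta$, so the $\beta$-scaling of the lemma cannot be recovered from your inequality.
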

\begin{proof}
We derive the bound through the triangular inequality and the definition of the representative layer as follows:
\begin{align*}
  \bigl\|\theta^{(s+1)}_0 - \theta^{(s)}_T\bigr\|
  &\;=\;
  \Bigl\| 
    \theta_{\text{anchor}}^{(s)}
    + \beta\sum_{j\in \mathrm{g}_n}(\theta^{(s)}_j - \theta_{\text{anchor}}^{(s)})
    - \theta^{(s)}_{\text{anchor}}
    - \sum_{k\in \mathrm{g}_n}\bigl[\theta^{(s)}_k - \theta_{\text{anchor}}^{(s)}\bigr]
  \Bigr\| \\
  &= \Bigl\|\bigl(\beta-1\bigr)\sum_{j\in \mathrm{g}_n}
       \bigl(\theta^{(s)}_j - \theta_{\text{anchor}}^{(s)}\bigr)\Bigr\|
  \;\le\; |\beta-1|\,|\mathrm{g}_n|\max_{j,k}\|\theta^{(s)}_j - \theta^{(s)}_k\|.
\end{align*}
The desired bound follows by summing over all groups.
\end{proof}

\subsection{Overall Convergence Across $S$ Stages}
We now combine per‐stage convergence with the initialization error to bound the suboptimality of the final model $\theta^{(S)}_T$. Let $\Delta_s = F_s(\theta^{(s)}_0) - F_s^*$ denote the optimality gap at stage $s$. From Equation~\eqref{eq:fedavg_rate}, after $T_s$ rounds at stage $s$,
\begin{equation}
  \Delta_s \;-\; \Delta_{s+1}
  \;\ge\;
  \underbrace{\frac{\eta_s K_s}{2T_s}
    \sum_{t=0}^{T_s-1}\mathbb{E}\|\nabla F_s(\theta^{(s)}_t)\|^2}_{\text{descent}}
  - \underbrace{L\eta_s^2K_s^2 G^2}_{\text{variance}}.
\end{equation}
Accounting for the initialization shift $\|\theta^{(s+1)}_0-\theta^{(s)}_T\|$ and telescoping over $s=1,\dots,S$, we obtain
\begin{equation}
  F_S(\theta^{(S)}_T) - F_1(\theta^{(1)}_0)
  \;\le\; -\sum_{s=1}^S\Bigl[
    \frac{\eta_s K_s}{2T_s}\sum_{t}\|\nabla F_s\|^2 - L\eta_s^2K_s^2 G^2
    - O(\beta\,\delta_s)\Bigr].
\end{equation}
By selecting step‐sizes $\eta_s=O(1/\sqrt{T_s}K_s)$, communication rounds $T_s=O(1/(\varepsilon^2K_s))$, and ensuring $\beta\delta_s=O(\varepsilon^2)$ via sufficiently fine layer grouping (i.e., high intra‐group similarity), we guarantee that the global model reaches an $\varepsilon$‐stationary point of the full objective within
\begin{equation}
  \sum_{s=1}^S T_s \;=\; O\Bigl(\sum_{s=1}^S \frac{1}{K_s\varepsilon^2}\Bigr)
  \;=\; O\Bigl(\frac{1}{\varepsilon^2}\Bigr)
\end{equation}
communication rounds. This convergence rate matches that of end-to-end FedAvg up to constant factors, thereby establishing the theoretical efficiency of \our. This completes the proof of convergence for \our.

\paragraph{Conclusion.}  In summary, \our retains the convergence guarantees of FedAvg under nonconvex objectives while distributing the computational load over multiple lightweight stages.  The cross‐stage knowledge transfer ensures that the optimization trajectory remains close to a local optimum as the model capacity grows, and our quantitative analysis elucidates the efficiency of this transfer.

\section{Additional Implementation Details}\label{appendix_implement}

Our \textsc{\our} is implemented using PyTorch with the support of HuggingFace Transformers library~\cite{wolf2019huggingface} for model and dataset management. Following the experimental setup of OpenFedLLM~\cite{ye2024openfedllm}, we randomly distribute the Alpaca-GPT4 dataset across 20 devices, with 10\% of devices randomly sampled for participation in each training round. Each selected device performs 10 local training iterations with a batch size of 16. The local fine-tuning process utilizes the AdamW optimizer coupled with a cosine learning rate scheduler.  
We adopt a staged learning rate strategy, starting at 1e-6 and incrementing by a factor of 10 at each subsequent stage until reaching 1e-4.
Additionally, we exclusively apply LoRA to $\mathbf{W}_q$ and $\mathbf{W}_v$ matrices in the attention layers~\cite{hu2021lora} and configure the LoRA module with a rank of 32. The maximum sequence length is set to 512 tokens~\cite{ye2024openfedllm}.
The total number of federated fine-tuning rounds is set to 300 for LLaMA2-7B and LLaMA3.1-8B, and increases to 400 for LLaMA2-13B. Moreover, to improve computational efficiency, we apply INT4 quantization~\cite{ye2024openfedllm} to all models and conduct experiments on a single NVIDIA H800 GPU. 
To ensure the reliability of our results, all experiments are repeated multiple times, with the averaged values reported as the final results.

\section{Limitations}\label{appendix_limitation}

While our proposed \textsc{\our} demonstrates superior  performance, several limitations warrant acknowledgment. 
First, our current research primarily focuses on federated learning within a single organization. Extending our method to cross-organizational collaborative scenarios, where addressing incentive mechanisms, trust establishment, and privacy concerns becomes paramount, represents a significant yet valuable direction for future investigation. Second, although our approach substantially reduces computational requirements compared to traditional methods, the overall environmental footprint of training LLMs remains considerable. Future work should more comprehensively quantify the carbon emission reductions achieved through our developmental paradigm and explore additional algorithmic and system-level optimizations to further minimize environmental impact.

\section{Broader Impacts}\label{appendix_broader_Impacts}

\textbf{Positive Impacts.} \textsc{\our} demonstrates significant potential in reducing computational overhead during LLM fine-tuning, leading to substantial energy savings and environmental benefits. This efficiency gain makes LLM adaptation more accessible to researchers and organizations with limited computational resources. Additionally, the accelerated convergence achieved through our method not only shortens the training cycle but also enables more rapid deployment and iteration of AI models, potentially facilitating faster progress in various AI applications.

\textbf{Negative Impacts.} While \textsc{\our} represents an advancement in efficient model training, we acknowledge the broader ethical considerations inherent in AI development. However, we do not identify any direct negative impacts specific to our method beyond those generally associated with machine learning and AI technologies. As with any AI advancement, we encourage responsible implementation and careful consideration of potential applications.

\end{document}